\theoremstyle{plain}
\theoremstyle{definition}
\theoremstyle{remark}
\icmltitlerunning{Exponential Family Variational Flow Matching for Tabular Data Generation}
\begin{document}

\twocolumn[
\icmltitle{Exponential Family Variational Flow Matching for Tabular Data Generation}



\icmlsetsymbol{equal}{*}

\begin{icmlauthorlist}
\icmlauthor{Andr\'es Guzm\'an-Cordero}{equal,vect,comp}
\icmlauthor{Floor Eijkelboom}{equal,comp}
\icmlauthor{Jan-Willem van de Meent}{comp}

\end{icmlauthorlist}

\icmlaffiliation{vect}{Vector Institute}
\icmlaffiliation{comp}{Bosch-Delta Lab}

\icmlcorrespondingauthor{Andrés Guzmán-Cordero}{andresguzco@gmail.com}
\icmlcorrespondingauthor{Floor Eijkelboom}{f.eijkelboom@uva.nl}

\icmlkeywords{Machine Learning, ICML}

\vskip 0.3in
]



\printAffiliationsAndNotice{\icmlEqualContribution}

\begin{abstract}
While denoising diffusion and flow matching have driven major advances in generative modeling, their application to tabular data remains limited, despite its ubiquity in real-world applications. 
To this end, we develop TabbyFlow, a variational Flow Matching (VFM) method for tabular data generation. 
To apply VFM to data with mixed continuous and discrete features, we introduce Exponential Family Variational Flow Matching (EF-VFM), which represents heterogeneous data types using a general exponential family distribution. 
We hereby obtain an efficient, data-driven objective based on moment matching, enabling principled learning of probability paths over mixed continuous and discrete variables. 
We also establish a connection between variational flow matching and generalized flow matching objectives based on Bregman divergences. 
Evaluation on tabular data benchmarks demonstrates state-of-the-art performance compared to baselines.
\end{abstract} 
\section{Introduction}
Generative modeling has become a fundamental task in machine learning, enabling the synthesis of complex and high-fidelity data across various domains. At the forefront of this evolution, different frameworks have been proposed, which focus on classical data modalities (e.g., images, text) for deep learning \citep{ramesh2022hierarchical, rombach2022high}. Flow-based approaches have shown remarkable progress in recent research, demonstrating increasing effectiveness and scalability. Building on foundations in continuous normalizing flows (CNF) \citep{chen2018neural, song2021maximum}, these methods have steadily advanced in their capabilities, though often requiring significant computational resources \cite{ben2022matching, rozen2021moser, grathwohl2018scalable}.

Among these approaches, Flow Matching has emerged as a particularly promising direction, proposing to learn a conditional vector field that can transport a source distribution to a target distribution in a simulation-free manner \cite{lipman2023flow}. To do this, it uses an interpolation of noise and real data. This framework has been further expanded to general geometries \citep{chen2024flow}, discrete probability paths \citep{gat2024discreteflowmatching}, and different applications \citep{wildberger2024flow, dao2023flow, kohler2023flow}. Recent theoretical work has established connections between flow matching and other generative approaches, showing equivalences under certain conditions that help unify the understanding of these methods \citep{albergo2023stochastic}. Variational Flow Matching (VFM) \citep{eijkelboom2024variational} generalizes flow matching as a general variational inference problem over trajectories induced by the used interpolation in flow matching. 
Recent theoretical work has also highlighted deep connections between flow matching, score-based methods, and likelihood training, showing that these approaches can be viewed as special cases within a broader variational framework \citep{albergo2023stochastic}.

In this paper, we consider the application of flow matching to the modeling of tabular data, a ubiquitous data modality in a range of domains including finance, healthcare, and marketing. Tabular data pose unique modeling challenges due to heterogeneous features, potential missing values, and varying scales \citep{survey, Wang2024}. While some diffusion-based approaches have been developed \citep{kotelnikov23a, tabsyn, TabDiff}, models for tabular data are less widespread than their counterparts for images and text. In this context, recent work on variational flow matching \citep{eijkelboom2024variational, zaghen2025towards, eijkelboom2025controlled} presents a promising avenue for generating mixed continuous and discrete features. VFM frames flow matching as an inference problem, in which the goal is to learn a variational posterior over data points that can be associated with the current interpolation point. This presents an opportunity to model heterogeneous data, which can be represented in a conceptually straightforward manner by matching the variational distribution to the data type for each variable. 

To realize this potential, we propose Exponential Family Variational Flow Matching (EF-VFM), an extension of VFM that incorporates exponential family distributions. Tabular data inherently consists of mixed variable types -- continuous, categorical, and binary -- necessitating a framework that can model these structures in a principled and unified way. Exponential families provide a natural solution by parameterizing data through sufficient statistics, enabling direct integration into the VFM paradigm. This approach not only facilitates efficient training via moment matching but also establishes deep connections between VFM and a generalized flow matching objective through the lens of Bregman divergences, offering a theoretical foundation for learning probability paths over mixed data types. To demonstrate the effectiveness of EF-VFM, we introduce TabbyFlow, a model that achieves state-of-the-art performance on standard tabular benchmarks, improving both fidelity and diversity in synthetic data generation.

\begin{figure*}
    \centering
    \includegraphics[width=\linewidth]{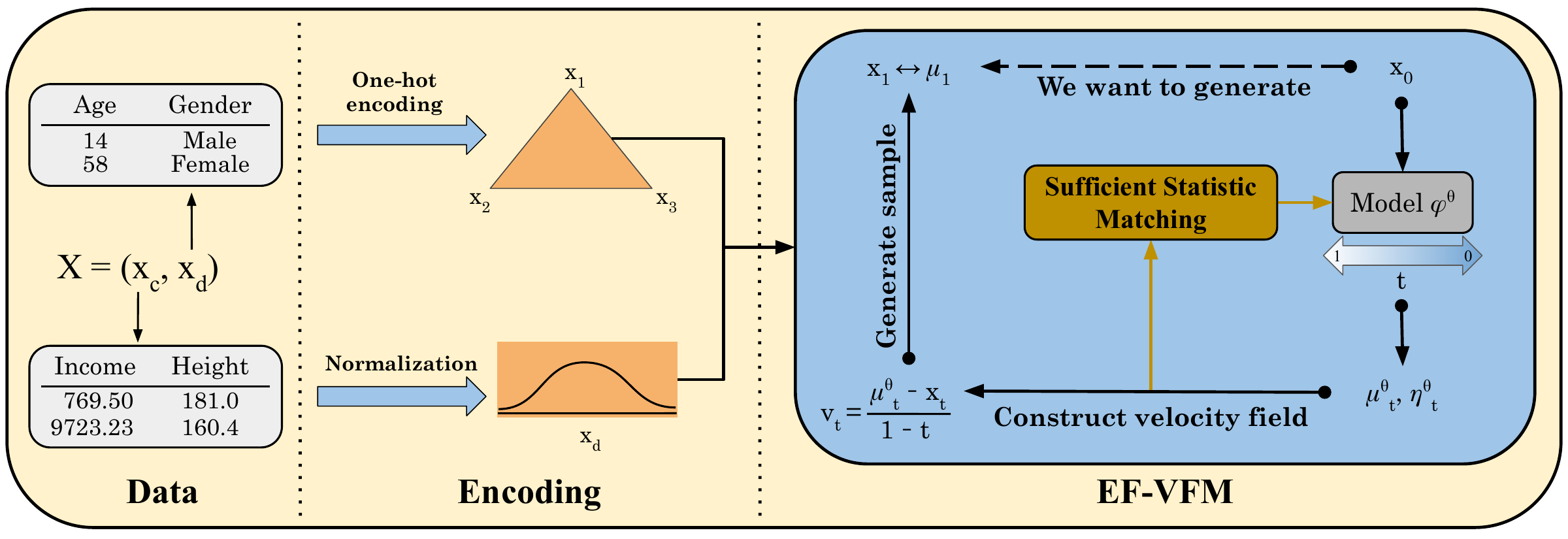}
    \caption{\textbf{Exponential Family Variational Flow Matching (EF-VFM)} is a generative modeling framework designed for mixed continuous and discrete variables. By leveraging the \textit{exponential family} and a mean-field assumption, EF-VFM efficiently matches the sufficient statistics of the distributions via learned probability paths, ensuring state-of-the-art fidelity and diversity in synthetic data.}
    \label{fig:abstract}
\end{figure*}
\section{Background}

\subsection{Transport Framework for Generative Modeling}

A central perspective in modern generative modeling interprets the task of sampling from a target distribution $p_1$ as transporting a base distribution $p_0$ along a (continuous) time path. Typically, $p_0$ is chosen to be a simple, tractable distribution, such as a standard Gaussian on $\mathbb{R}^D$. The transformation is defined through a time-dependent mapping
\begin{equation}
    \varphi_t \colon [0, 1] \times \mathbb{R}^D \to \mathbb{R}^D,
\end{equation}
where $\varphi_0$ is the identity map and $\varphi_1$ pushes $p_0$ onto $p_1$. In normalizing flows, this evolution is governed by an ordinary differential equation (ODE):
\begin{equation}
    \frac{\mathrm{d}}{\mathrm{d}t} \varphi_t(x) = u_t(\varphi_t(x)),
    \quad
    \varphi_0(x) = x,
\end{equation}
where $u_t: [0, 1] \times \mathbb{R}^D \to \mathbb{R}^D$ is a time-dependent velocity field. Given a parameterized model $v_t^\theta$ (e.g., a neural network), the goal is to approximate the transport dynamics that map samples from $p_0$ to $p_1$.

If $u_t$ is locally Lipschitz, the ODE admits a unique global solution, ensuring invertibility of $\varphi_t$. This allows for density estimation via the change-of-variables formula, which underpins likelihood-based training in normalizing flows. However, solving ODEs during training is computationally expensive, motivating alternative approaches that avoid explicit integration.

\subsection{Flow Matching}

Flow Matching (FM) circumvents the need for solving ODEs during training by directly learning the velocity field via regression:
\begin{equation}
    \mathcal{L}_{\mathrm{FM}}(\theta) =
    \mathbb{E}_{t,x}\bigl[
        \| u_t(x) - v_t^\theta(x) \|^2
    \bigr].
\end{equation}
Here, the expectation is typically uniform in $t \in [0,1]$ and sampled from the law of $\varphi_t(x_0)$ with $x_0 \sim p_0$. While $u_t$ is not known explicitly, it can be expressed in terms of a \textit{conditional velocity field} $u_t(x \mid x_1)$, which describes the motion of $x$ towards a designated endpoint $x_1$. The marginal velocity is then given by:
\begin{equation}
    u_t(x) =
    \mathbb{E}_{p_t(x_1 \mid x)}
    \bigl[ u_t(x \mid x_1) \bigr],
\end{equation}
where $p_t(x_1 \mid x)$ is the (unknown) posterior over endpoints. In practice, one can estimate $u_t(x)$ via Monte Carlo sampling of $u_t(x \mid x_1)$, leading to the \textit{Conditional Flow Matching} (CFM) objective:
\begin{equation}
    \mathcal{L}_{\mathrm{CFM}}(\theta) =
    \mathbb{E}_{t,x_1,x}
    \left[ \| u_t(x \mid x_1) - v_t^\theta(x) \|^2 \right].
\end{equation}
A key property of CFM is that minimizing this conditional loss provides an unbiased gradient estimate of $\mathcal{L}_{\mathrm{FM}}(\theta)$, ensuring that the learned model approximates the true marginal velocity. By avoiding ODE integration during training, Flow Matching offers a computationally efficient alternative to diffusion and score-based models.

\subsection{Flow Matching as a Variational Inference Problem}

Variational Flow Matching (VFM) extends Flow Matching by introducing a variational distribution $q_t^\theta(x_1 \mid x)$ in place of the unknown posterior $p_t(x_1 \mid x)$. Instead of estimating $u_t(x)$ via an expectation over $p_t(x_1 \mid x)$, VFM replaces it with an expectation over $q_t^\theta$:
\begin{equation}
    v_t^\theta(x) =
    \mathbb{E}_{q_t^\theta(x_1 \mid x)}
    \bigl[ u_t(x \mid x_1) \bigr].
\end{equation}
To ensure that $q_t^\theta(x_1 \mid x)$ accurately approximates the posterior $p_t(x_1 \mid x)$, VFM minimizes the KL divergence between their joint distributions:
\begin{align}
    \mathcal{L}_{\mathrm{VFM}}(\theta) &= 
     \mathbb{E}_t \bigl[ \text{KL}\bigl(p_t(x_1, x) ~\|~ q_t^{\theta}(x_1, x) \bigr) \bigr]
    \\ &= -\mathbb{E}_{t, x_1, x} \bigl[ \log q_t^\theta(x_1 \mid x) \bigr] + \text{const.}
\end{align}
Minimizing $\mathcal{L}_{\mathrm{VFM}}(\theta)$ ensures that $q_t^\theta(x_1 \mid x)$ approximates $p_t(x_1 \mid x)$, causing $v_t^\theta(x)$ to converge to $u_t(x)$.

This variational formulation is particularly effective when $u_t(x \mid x_1)$ is linear in $x_1$, e.g., if $u_t$ corresponds to straight-line interpolation of diffusion, as then we obtain:
\[
    \mathbb{E}_{p_t(x_1\mid x)}[ u_t(x\mid x_1) ]
    =
    u_t\bigl( x \mid \mathbb{E}_{p_t(x_1\mid x)}[ x_1 ] \bigr).
\]
This implies that modeling only the mean of $p_t(x_1 \mid x)$ under $q_t^\theta$ suffices to recover the true marginal velocity. Consequently, VFM can be implemented with a mean-field parameterization without loss of generality:
\begin{equation}
    \mathcal{L}_{\mathrm{MF-VFM}}(\theta) =
    -\mathbb{E}_{t, x_1, x} \left[
    \sum_{d=1}^{D} \log q_t^{\theta}(x_1^d \mid x) \right].
\end{equation}
This factorization simplifies learning, reducing the problem of estimating a high-dimensional distribution to learning $D$ univariate distributions.

A key advantage of VFM is its flexibility in choosing $q_t^\theta$. By selecting the `right' distributions -- such as categorical distributions for discrete features or Gaussian for continuous ones -- VFM provides a unified treatment of discrete and continuous generative modeling. This generalization, combined with its computational efficiency, makes VFM particularly well-suited for tasks of mixed modality.

\subsection{Exponential Family}

The exponential family is a class of probability distributions that is widely used in statistics and machine learning due to its mathematical convenience and flexibility. Its structure simplifies parameter estimation, enables efficient inference, and unifies many commonly used distributions under a single framework. A distribution belongs to the exponential family if it can be written in the form
\begin{equation}
p(x \mid \eta) = h(x) \exp\left(\tau(x) \cdot \eta - A(\eta)\right),
\end{equation}
where $h(x)$ is the base measure, $\eta$ are the natural parameters, $\tau(x)$ represents the sufficient statistics, and $A(\eta)$ is the log-partition function ensuring normalization. Examples of distributions in this family include the Gaussian, Bernoulli, Poisson, and exponential distributions.

Exponential families can be understood through two equivalent parameterizations: the natural parameters $\eta$ and the mean parameters $\mu$. While natural parameters define the exponential family form directly, mean parameters arise from expectations of the sufficient statistics, i.e.
\begin{equation}
    \mu = \mathbb{E}[\tau(x)] = \nabla A(\eta),
\end{equation}
where $\nabla A(\eta)$ is the gradient of the log-partition function. This relationship can be inverted through the conjugate dual $A^*$ of the log-partition function, giving $\eta = \nabla A^*(\mu)$.
The mean parameterization is particularly useful in practice as mean parameters often have direct interpretations -- for instance, in a Gaussian distribution, they correspond to the mean and variance. This makes them especially valuable for maximum likelihood estimation, where empirical sufficient statistics directly estimate the mean parameters.

Throughout this work, we assume our exponential families are minimal, meaning their sufficient statistics are linearly independent. This ensures the relationship between natural and mean parameters is bijective, enabling the clean theoretical results and practical algorithms that follow.

\section{Exponential Family VFM}

\subsection{Motivation}
Tabular data presents a unique challenge for generative modeling: each column may contain different types of data -- continuous, categorical, or binary -- all of which must be modeled jointly. The key insight of our work is that exponential families provide a natural framework for extending VFM to tabular data. Exponential families offer two critical advantages in this setting. First, they include distributions suitable for each data type commonly found in tables -- Gaussian distributions for continuous variables like age or income, categorical distributions for discrete variables like education level or occupation, Bernoulli distributions for binary indicators like purchase history or customer status, and Poisson or exponential distributions for count or time-based data. Second, they admit a unified mathematical treatment through their mean parameterization, which, as we show in \cref{ssec:exp_fam_vfm}, enables efficient training through sufficient statistics matching.
This exponential family perspective not only provides practical benefits for tabular data generation but also deepens our theoretical understanding of flow matching. As we demonstrate in \cref{ssec:bregman_connection}, it reveals a fundamental connection between VFM and Bregman divergences, providing a principled justification for our approach while establishing links to classical flow matching objectives.

\subsection{Exponential Families for VFM}
\label{ssec:exp_fam_vfm}

\paragraph{Moment Matching.}
To handle the diverse data types present in tabular data, we use exponential families as our variational distributions, as they naturally accommodate both continuous and discrete variables. For each column type, we learn a parameterization (either through natural or mean parameters) using a neural network $\theta$. Formally, this distribution can be written as
\begin{equation}
    q_t^\theta(x_1 \mid x_t) = \exp\left(\tau(x_1) \cdot  \eta_t^{\theta}(x) - A(\eta^{\theta}_t(x))\right).
\end{equation}
As such, the VFM loss (as given by a log-likelihood style objective) reduces to
\begin{align}
    \mathcal{L}(\theta) &= -\mathbb{E}_{t, x_1, x} \left[\log q_t^{\theta}(x_1 \mid x)\right] \\
    &= -\mathbb{E}_{t, x_1, x} \left[ \tau(x_1) \cdot \eta_t^{\theta}(x)  - A(\eta_t^{\theta}(x)) \right].
\end{align}
A key advantage of this formulation for tabular data is that optimizing this loss reduces to a simple and efficient objective for training VFM, which we term \textit{statistics matching}. The gradient of the loss depends solely on the difference between the empirical sufficient statistics and the model's predicted sufficient statistics - a property that proves crucial for handling heterogeneous column types efficiently. This reduction in complexity not only streamlines the training process but also enhances scalability across columns with different distributions, allowing for the practical application of VFM to real-world tabular datasets. Formally, the following holds:
\begin{restatable}{restatable_proposition}{gradientefvfm}
\label{thm:gradientefvfm}
Let $q_t^{\theta}(x_1 \mid x)$ be a variational distribution from an exponential family, parameterized by natural parameters $\eta^{\theta}_t(x)$, which depend on neural network parameters $\theta$. The gradient of the VFM objective  $\nabla_{\theta} \mathcal{L}(\theta)$ 
is:
\begin{equation}
 - \mathbb{E}_{t, x_1, x} \left[
    \left(\mu_t(x) - \mu_t^\theta(x) \right)  
    \cdot \nabla_{\theta} \eta_t^{\theta}(x)
\right],
\end{equation}
where $\mu_t(x) = \mathbb{E}_{p_t(x_1 \mid x)}[\tau(x_1)]$ are the moments relative to $p_t(x_1 \mid x)$, and $\mu_t^\theta(x) = \mathbb{E}_{q^\theta_t(x_1 \mid x)}[\tau(x_1)]$ are the moments relative to the variational approximation.
\end{restatable}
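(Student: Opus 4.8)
The plan is to compute $\nabla_\theta \mathcal{L}(\theta)$ directly from the exponential-family form of the log-likelihood and then simplify using two standard facts about minimal exponential families: first, that $\nabla_\eta A(\eta)$ equals the mean parameter $\mathbb{E}[\tau(x_1)]$; second, that $\nabla_\theta$ commutes with the expectation, which is justified by smoothness of $\eta \mapsto A(\eta)$ on the interior of the natural parameter space together with a dominated-convergence argument valid because $\eta_t^\theta(x)$ stays in that interior (this is where minimality is used).

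First I would substitute $\log q_t^\theta(x_1\mid x) = \tau(x_1)\cdot\eta_t^\theta(x) - A(\eta_t^\theta(x))$ into $\mathcal{L}(\theta) = -\mathbb{E}_{t,x_1,x}[\log q_t^\theta(x_1\mid x)]$ and move $\nabla_\theta$ inside the expectation. Applying the chain rule, the first term contributes $-\mathbb{E}[\tau(x_1)\cdot\nabla_\theta\eta_t^\theta(x)]$, where $\nabla_\theta\eta_t^\theta(x)$ denotes the Jacobian of $\eta_t^\theta(x)$ in $\theta$ and the ``$\cdot$'' is the contraction over the sufficient-statistic index (a vector-Jacobian product). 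The second term contributes $+\mathbb{E}[\nabla_\eta A(\eta_t^\theta(x))\cdot\nabla_\theta\eta_t^\theta(x)]$. Invoking $\nabla_\eta A(\eta_t^\theta(x)) = \mathbb{E}_{q_t^\theta(x_1\mid x)}[\tau(x_1)] = \mu_t^\theta(x)$ collapses the two contributions into $-\mathbb{E}_{t,x_1,x}\bigl[(\tau(x_1) - \mu_t^\theta(x))\cdot\nabla_\theta\eta_t^\theta(x)\bigr]$.

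The remaining step replaces the raw statistic $\tau(x_1)$ by its conditional moment $\mu_t(x)$. Here I would use the tower property: write $\mathbb{E}_{x_1,x} = \mathbb{E}_x\,\mathbb{E}_{x_1\mid x}$ and observe that both $\mu_t^\theta(x)$ and $\nabla_\theta\eta_t^\theta(x)$ are functions of $(t,x)$ only, hence constant under the inner expectation over $p_t(x_1\mid x)$. Therefore $\mathbb{E}_{x_1\mid x}\bigl[(\tau(x_1) - \mu_t^\theta(x))\cdot\nabla_\theta\eta_t^\theta(x)\bigr] = (\mu_t(x) - \mu_t^\theta(x))\cdot\nabla_\theta\eta_t^\theta(x)$ with $\mu_t(x) = \mathbb{E}_{p_t(x_1\mid x)}[\tau(x_1)]$, and the claimed formula follows; the outer expectation over $x_1$ can be reinstated freely since the integrand no longer depends on it.

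I expect the only genuinely delicate point to be the interchange of $\nabla_\theta$ with the expectation, i.e.\ differentiating under the integral sign, which needs $\eta_t^\theta(x)$ to remain in the interior of the natural parameter domain so that $A$ and its derivatives are well-behaved there; everything else is bookkeeping with the chain rule, the identity $\nabla_\eta A = \mu$, and the law of total expectation.
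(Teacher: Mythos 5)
Your proposal is correct and follows essentially the same route as the paper's proof: expand the exponential-family log-likelihood, differentiate term by term using $\nabla_\eta A(\eta) = \mu$, and then pass from $\tau(x_1)$ to $\mu_t(x)$ via the tower property (a step the paper states more tersely, and which you make appropriately explicit). Your added care about differentiating under the integral sign and the disappearance of the $\theta$-independent base-measure term $\log h(x_1)$ only strengthens an argument that is otherwise identical to the paper's.
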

\begin{proof}
    See \cref{appendix:gradientefvfm}. It follows from this identity that the gradient is zero when the moment-matching condition $\mu_t^\theta(x) = \mu_t(x)$ is satisfied.
\end{proof}

\paragraph{Simplification under Linear Conditional Velocity Fields.}
If the conditional velocity field $u_t(x \mid x_1)$ is linear in $x_1$, it suffices to match only the mean of the posterior distribution $p_t(x_1 \mid x_t)$. In this case, the marginal velocity field simplifies to  
\begin{equation}
    u_t(x) = \mathbb{E}\left[u_t(x \mid x_1)\right] = u_t(x \mid \mathbb{E}[x_1]).
\end{equation}
This reduces the complexity of the optimization problem: rather than working with the full posterior distribution, one only needs to learn the parameters associated with the sufficient statistics $\tau(x_1) = x_1$. 

We assume that $q(x_1 \mid x)$ belongs to an exponential family, where $\tau(x_1) = x_1$ serves as a sufficient statistic. This assumption holds in many practical cases, as $\mathbb{E}[x_1]$ can often be computed in closed form, allowing for efficient parameter updates. For example, in a Gaussian setting, $x_1$ directly corresponds to the mean, aligning naturally with the moments of the distribution. Care is needed when the sufficient statistics differ. In a categorical distribution, for instance, the sufficient statistics are given by $\tau_k(x) = \mathbb{I}[x = k]$, where $\mathbb{I}[\cdot]$ is the indicator function. Here, the assumption $\tau(x_1) = x_1$ holds only if $x_1$ is represented as a one-hot vector, ensuring that $\tau_k(x) = x_k$. Without this representation, the correspondence between the learned parameters and the sufficient statistics would need to be reformulated to remain consistent with the exponential family framework.

The linearity assumption implies that the generative process is fully governed by the expected value of $x_1$. In the context of tabular data, this leads to intuitive loss functions for different column types:

\begin{itemize}
   \item \textbf{Categorical columns:} For columns like `occupation' or `education level', the objective simplifies to minimizing cross-entropy loss between predicted and empirical category probabilities.
   \item \textbf{Continuous columns:} For numerical columns like `age' or `income', the optimization reduces to minimizing the mean squared error between predicted and empirical means.
   \item \textbf{Binary indicators:} For yes/no columns like `has purchased' or `is subscriber', the objective naturally handles these as special cases of categorical variables with two states.
\end{itemize}

This unified treatment of different column types is crucial for tabular data generation. By structuring the problem through sufficient statistics matching, we can handle heterogeneous data types while maintaining computational efficiency - a key requirement given the typically modest size of tabular datasets. The approach naturally accommodates the mixed continuous and discrete variables found in real-world tables, while the connection to familiar loss functions makes the training process interpretable and robust.

\subsection{Connection to Flow Matching}
\label{ssec:bregman_connection}
\paragraph{VFM as Bregman Divergence Minimization.}
At first glance, the variational and standard flow matching objectives seem rather different. Rather than optimizing a log-likelihood, the flow matching objective involves minimizing a dissimilarity metric (typically MSE) between the conditional velocity field and the model. For tabular data, different column types naturally suggest different loss functions -- MSE for continuous variables, cross-entropy for categorical ones -- raising the question of how these diverse objectives relate. The VFM paper demonstrates that optimizing a specific Gaussian posterior distribution is equivalent to minimizing the MSE between the conditional velocity field and the model. Interestingly, this connection generalizes: each exponential family induces its own natural Bregman divergence, providing a unified framework for handling mixed data types. As we will show, this reveals a deep connection between the probabilistic view on flow matching and the conditional flow matching objective.

Many loss functions are derived from log-likelihood estimations of various distributions, e.g., the MSE from a Gaussian log-likelihood and cross-entropy from a categorical log-likelihood. For the exponential family, it is possible to derive a general loss function in terms of a \textit{Bregman} divergence, i.e., a divergence induced by a convex function $\psi$, defined as:
\begin{equation}
    D_{\psi}(u, v) := \psi(u) - \psi(v) - \left\langle u - v, \, \nabla_v \psi(v) \right\rangle.
\end{equation}
Note that indeed $D_{\psi}(u, v) \geq 0$ for all $u, v$, and moreover that $D_{\psi}(u, v) = 0$ if and only if $u = v$. Formally, we have the following result:

\begin{restatable}{restatable_proposition}{expfambreg}
\label{thm:expfambreg}
Let $q^{\theta}_t(x_1 \mid x)$ be an exponential family distribution that is regular and minimal. Then, the variational flow matching objective is equivalent to minimizing the Bregman divergence induced by the conjugate dual of the log normalizer evaluated between the sufficient statistics and predicted mean parameters. 
\end{restatable}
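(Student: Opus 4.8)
The plan is to expand the negative log-likelihood of the exponential-family variational distribution and rewrite it, via Fenchel--Legendre duality between $A$ and its conjugate dual $A^*$, as a Bregman divergence plus a term that does not involve $\theta$. Concretely, I would first substitute the exponential-family form $q_t^\theta(x_1 \mid x) = h(x_1)\exp\!\big(\tau(x_1)\cdot\eta_t^\theta(x) - A(\eta_t^\theta(x))\big)$ into $\mathcal{L}(\theta) = -\mathbb{E}_{t,x_1,x}[\log q_t^\theta(x_1 \mid x)]$, so that the integrand becomes $-\,\tau(x_1)\cdot\eta_t^\theta(x) + A(\eta_t^\theta(x)) - \log h(x_1)$, in which only the first two terms depend on $\theta$.

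Next I would invoke the standard facts about regular, minimal exponential families: the map $\eta \mapsto \mu = \nabla A(\eta)$ is a bijection from the natural-parameter space onto the interior of the mean-parameter space, its inverse is $\mu \mapsto \eta = \nabla A^*(\mu)$, and the Fenchel--Young inequality holds with equality, $A(\eta) + A^*(\mu) = \langle \eta, \mu\rangle$, whenever $\mu = \nabla A(\eta)$. Writing $\mu_t^\theta(x) := \nabla A(\eta_t^\theta(x))$ for the predicted mean parameters (the same $\mu_t^\theta(x)$ as in \cref{thm:gradientefvfm}), I would use this equality to replace $A(\eta_t^\theta(x))$ by $\langle \eta_t^\theta(x), \mu_t^\theta(x)\rangle - A^*(\mu_t^\theta(x))$ and to replace $\eta_t^\theta(x)$ by $\nabla A^*(\mu_t^\theta(x))$. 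Collecting terms turns the $\theta$-dependent part of the integrand into $\langle \mu_t^\theta(x) - \tau(x_1),\, \nabla A^*(\mu_t^\theta(x))\rangle - A^*(\mu_t^\theta(x))$, and adding and subtracting $A^*(\tau(x_1))$ identifies this exactly with $D_{A^*}\!\big(\tau(x_1), \mu_t^\theta(x)\big) - A^*(\tau(x_1))$. Since $A^*(\tau(x_1))$ and $\log h(x_1)$ are independent of $\theta$, taking expectations gives $\mathcal{L}(\theta) = \mathbb{E}_{t,x_1,x}\big[D_{A^*}(\tau(x_1), \mu_t^\theta(x))\big] + \mathrm{const}$, which is the claimed equivalence. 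As a consistency check I would note that, conditioned on $x$, the expected divergence $\mathbb{E}_{p_t(x_1\mid x)}[D_{A^*}(\tau(x_1), \mu)]$ is minimized at $\mu = \mathbb{E}_{p_t(x_1\mid x)}[\tau(x_1)] = \mu_t(x)$, recovering the moment-matching optimum of \cref{thm:gradientefvfm}.

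I expect the main obstacle to be not the algebra but the domain bookkeeping: the sufficient statistic $\tau(x_1)$ need not lie in the interior of the mean-parameter domain on which $\nabla A^*$ is the clean inverse of $\nabla A$ -- for a categorical variable with one-hot encoding, $\tau(x_1)$ sits on the boundary of the simplex. I would therefore argue that $A^*$ extends to the closure of its domain by lower semicontinuity, that $A^*(\tau(x_1))$ is finite almost surely for the families used in practice (in the discrete case it is a negative entropy, finite under the convention $0\log 0 = 0$), and hence that $D_{A^*}(\tau(x_1), \mu_t^\theta(x))$ remains well defined since $\mu_t^\theta(x)$ stays in the interior. Minimality and regularity enter precisely here: they are exactly what make $\nabla A$ a diffeomorphism onto the interior of the mean-parameter set and $\nabla A^*$ its genuine inverse, which is what licenses the substitution step; I would flag at that point where each hypothesis is used.
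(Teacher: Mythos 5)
Your proposal is correct and follows essentially the same route as the paper's proof: expand the exponential-family log-likelihood, use the Legendre duality $A^*(\mu) = \mu\cdot\eta(\mu) - A(\eta(\mu))$ for a regular minimal family to trade $A(\eta_t^\theta(x))$ for $A^*(\mu_t^\theta(x))$, and add and subtract $A^*(\tau(x_1))$ to identify the $\theta$-dependent part with $D_{A^*}(\tau(x_1),\mu_t^\theta(x))$ up to $\theta$-independent terms. Your closing remarks on the boundary behaviour of $\tau(x_1)$ (e.g.\ one-hot vectors on the simplex boundary) address a point the paper's proof passes over silently, and are a worthwhile refinement rather than a different argument.
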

\begin{proof}
    See \cref{appendix:efvfmasbregman}.
\end{proof}

\paragraph{Connection to Flow Matching.}
One key property of Bregman divergences is that when taking the gradient in the second argument, the total divergence is invariant under affine (and thus, convex) combinations in the first argument, i.e., when $\sum_{i=1}^n \alpha_i = 1$, then
\begin{equation}
    \nabla_y D_{\psi}\left(\sum_{i=1}^n \alpha_i x_i, y\right) = \sum_{i=1}^n \alpha_i \nabla_y D_{\psi}(x_i, y).
\end{equation}
In particular, this implies that expectation can be taken either inside or outside the divergence, and hence for any random variable $x$, we have
\begin{equation}
    \nabla_y D_{\psi} \left( \mathbb{E} \left[x\right], y \right) = \mathbb{E} \left[ \nabla_y D_{\psi}(x, y) \right].
\end{equation}
As it turns out, it is \textit{exactly} this condition that allows one to switch from the marginal to conditional objectives in flow matching. That is, one can show that indeed the objectives
\begin{equation}
\nabla_{\theta} \mathcal{L}_{\text{FM}}(\theta) = \nabla_{\theta} \mathbb{E}_{t, x} \left[D_{\psi}(u_t(x), v_t^{\theta}(x)) \right]
\end{equation}
and 
\begin{equation}
\nabla_{\theta} \mathcal{L}_{\text{CFM}}(\theta) = \nabla_{\theta} \mathbb{E}_{t, x_1, x} \left[D_{\psi}(u_t(x \mid x_1), v_t^{\theta}(x)) \right]
\end{equation}
coincide exactly when  $D_{\psi}$ is a Bregman divergence, as recently shown by \citet{holderrieth2024generator}.

Recognizing that the Variational Flow Matching objective corresponds to minimizing a Bregman divergence provides a useful insight: it reveals that the fundamental `trick' in Flow Matching -- the ability to optimize using conditional trajectories instead of marginal ones -- emerges naturally from probabilistic inference. For tabular data generation, this is particularly valuable as it provides a principled way to handle different column types through their naturally induced divergences: squared error for continuous variables and (categorical) cross-entropy for categorical ones, and so on. This theoretical understanding has two important practical implications. First, it suggests that Flow Matching's workings are fundamentally connected to probabilistic inference and optimization. Second, it suggests natural extensions of Flow Matching to new settings by choosing appropriate Bregman divergences. This connection between Flow Matching and probabilistic inference opens new avenues for both theoretical analysis and practical improvements in generative modeling, particularly for heterogeneous data types.

\subsection{A Comment on Information Geometry and Natural Gradient Descent in Exponential Family VFM}
Information geometry studies probability distributions through the framework of differential geometry, interpreting the parameter spaces of these distributions as Riemannian manifolds. As learning takes place in this parameter space, which is not Euclidean \citep{Amari2016}, we need to take its structure into account when designing the learning method. The metric defining the geometry of such a manifold is the Fisher information matrix, given by:
\begin{equation}
    g_{ij}(\theta) = \mathbb{E}\left[ \frac{\partial \log p(x; \theta)}{\partial \theta_i} \frac{\partial \log p(x; \theta)}{\partial \theta_j} \right].
\end{equation}
This perspective allows for the utilization of the manifold's curvature in optimization processes.

In exponential family distributions, the Fisher information matrix $\mathrm{I}_F(\theta)$ is the Hessian of the log-partition function $A(\eta)$, i.e.
\begin{equation}
    \mathrm{I}_F(\theta) = \nabla^2 A(\eta).
\end{equation}
 Though not studied in this work explicitly, this relationship opens to possibility to apply e.g., natural gradient descent in flow models, an optimization method that adjusts parameter updates according to the manifold's curvature, potentially improving convergence. This seems especially promising regarding VFM on manifolds \cite{zaghen2025towards}.

\section{Experiments (TabbyFlow)}

To demonstrate our theoretical framework in practice, we introduce TabbyFlow, an implementation of EF-VFM specifically designed for tabular data generation. The key insight of TabbyFlow is its direct use of the exponential family perspective: each column in the table is modeled using the appropriate distribution from the exponential family. This approach naturally handles mixed data types while maintaining the efficiency benefits of sufficient statistics matching.
Our implementation uses a transformer architecture \cite{vaswani2017attention} to learn the parameters for each column's distribution through their sufficient statistics. Rather than requiring separate architectures or processing stages for different column types, TabbyFlow's unified treatment through exponential families allows for a simple, single-pass architecture (see \cref{app:implementation} for more information). Code can be found in \url{https://github.com/andresguzco/ef-vfm}.

\begin{table*}[t]
\caption{Performance comparison on the error rates (\%) of \textbf{Shape}.}
\centering
\resizebox{\textwidth}{!}{
\begin{tabular}{lccccccc}
\toprule
\textbf{Method} & \textbf{Adult} & \textbf{Default} & \textbf{Shoppers} & \textbf{Magic} & \textbf{Beijing} & \textbf{News} & \textbf{Average} \\
\midrule
CTGAN & $16.84 \pm 0.03$ & $16.83 \pm 0.04$ & $21.15 \pm 0.10$ & $9.81 \pm 0.08$ & $21.39 \pm 0.05$ & $16.09 \pm 0.02$ & $15.99$ \\
TVAE & $14.22 \pm 0.08$ & $10.17 \pm 0.05$ & $24.51 \pm 0.06$ & $8.25 \pm 0.06$ & $19.16 \pm 0.06$ & $16.62 \pm 0.03$ & $15.97$ \\
GOGGLE & $16.97$ & $17.02$ & $22.33$ & $1.90$ & $16.93$ & $25.32$ & $17.91$ \\
TabDDPM & $1.75 \pm 0.03$ & $1.57 \pm 0.08$ & $2.72 \pm 0.13$ & $1.01 \pm 0.09$ & $1.30 \pm 0.03$ & $78.75 \pm 0.01$ & $16.93$ \\
TabSyn & $0.81 \pm 0.05$ & $1.01 \pm 0.08$ & $1.44 \pm 0.07$ & $1.03 \pm 0.14$ & $1.26 \pm 0.05$ & $2.06 \pm 0.04$ & $1.35$ \\
TabDiff & $0.63 \pm 0.05$ & $1.24 \pm 0.07$ & $1.28 \pm 0.09$ & $\mathbf{0.78 \pm 0.08}$ & $\mathbf{1.03 \pm 0.05}$ & $2.35 \pm 0.03$ & $1.17$ \\
\midrule
\textbf{TabbyFlow} & $\mathbf{0.62 \pm 0.07}$ & $\mathbf{0.89 \pm 0.03}$ & $\mathbf{1.03 \pm 0.08}$ & $1.58 \pm 0.07$ & $1.05 \pm 0.05$ & $\mathbf{1.31 \pm 0.04}$ & $\mathbf{1.08}$ \\
\bottomrule
\end{tabular}}
\label{tab:shape_results}
\end{table*}

\begin{table*}[bp]
\centering
\caption{Performance comparison on the error rates (\%) of \textbf{Trend}.}
\resizebox{\textwidth}{!}{
\begin{tabular}{lccccccc}
\toprule
\textbf{Method} & \textbf{Adult} & \textbf{Default} & \textbf{Shoppers} & \textbf{Magic} & \textbf{Beijing} & \textbf{News} & \textbf{Average} \\
\midrule
CTGAN & $20.23 \pm 1.20$ & $26.95 \pm 0.93$ & $13.08 \pm 0.16$ & $7.00 \pm 0.19$ & $22.95 \pm 0.08$ & $5.37 \pm 0.05$ & $16.36$ \\
TVAE & $14.15 \pm 0.88$ & $19.50 \pm 0.95$ & $18.67 \pm 0.38$ & $5.82 \pm 0.49$ & $18.01 \pm 0.08$ & $6.17 \pm 0.09$ & $16.44$ \\
GOGGLE & $45.29$ & $21.94$ & $23.90$ & $9.47$ & $45.94$ & $23.19$ & $28.18$ \\
TabDDPM & $3.01 \pm 0.25$ & $4.89 \pm 0.10$ & $6.61 \pm 0.16$ & $1.70 \pm 0.22$ & $2.71 \pm 0.09$ & $13.16 \pm 0.11$ & $11.95$ \\
TabSyn & $1.93 \pm 0.07$ & $2.81 \pm 0.48$ & $2.13 \pm 0.10$ & $0.88 \pm 0.18$ & $3.13 \pm 0.34$ & $1.52 \pm 0.03$ & $2.33$ \\
TabDiff & $1.49 \pm 0.16$ & $\mathbf{2.55 \pm 0.75}$ & $1.74 \pm 0.08$ & $\mathbf{0.76 \pm 0.12}$ & $\mathbf{2.59 \pm 0.15}$ & $\mathbf{1.28 \pm 0.04}$ & $1.80$ \\
\midrule
\textbf{TabbyFlow} & $\mathbf{1.09 \pm 0.21}$ & $2.56 \pm 0.33$ & $\mathbf{1.58 \pm 0.15}$ & $1.07 \pm 0.26$ & $2.93 \pm 0.29$ & $1.38 \pm 0.09$ & $\mathbf{1.77}$ \\
\bottomrule
\end{tabular}}
\label{tab:trend_results}
\end{table*}

\subsection{Experimental setup}

\paragraph{Formalism.} We consider tabular datasets with $C_n$ numerical and $C_c$ categorical features. Each observation, denoted as  $\mathbf{x}^{(i)} = [\mathbf{x}_n^{(i)}, \mathbf{x}_c^{(i)}]$, consists of numerical features $\mathbf{x}_n^{(i)}\in \mathbb{R}^{C_n}$ and categorical features $\mathbf{x}_c^{(i)}$, where each categorical feature $x_{c_j}^{(i)}$ is represented as a one-hot vector on the probability simplex $\Delta^{C_{n_j}}$ in the case feature $j$ has $C_{n_j}$ categories.
By using an optimal transport interpolation which is linear in $x_1$, the VFM objective factorizes over dimensions:
\begin{equation*}
\mathcal{L}(\theta)
= -\mathbb{E}_{t, x_1, x} \left[\sum^D_{d=1} \log q^\theta_t(x^d_1 \mid x)\right],
\end{equation*}
allowing us to compute the vector field using the first moment of each one-dimensional distribution $q^\theta_t(x^d_1\mid x)$ independently.

\paragraph{Datasets.} 
We make use of seven well-known tabular datasets: Adult, Default, Shoppers, Magic, Fault, Beijing, News, and Diabetes. These datasets have various sizes, numbers of features, and distributions, and have been used before to evaluate generative models for tabular data. Each dataset contains a mixture of continuous and discrete variables (categorical or Bernoulli) and has a designated classification or regression downstream task. See \cref{app:data} for more details.

\begin{table*}[t]
\centering
\caption{Comparison of $\alpha$-Precision scores.}
\resizebox{\textwidth}{!}{
\begin{tabular}{lcccccccc}
\toprule
\textbf{Methods} & \textbf{Adult} & \textbf{Default} & \textbf{Shoppers} & \textbf{Magic} & \textbf{Beijing} & \textbf{News} & \textbf{Average} & \textbf{Ranking} \\
\midrule
CTGAN   & $77.74\pm0.15$  & $62.08\pm0.08$  & $76.97\pm0.39$  & $86.90\pm0.22$ & $96.27\pm0.14$ & $96.96\pm0.17$ & $82.82$ & $5$ \\ 
TVAE    & $98.17\pm0.17$  & $85.57\pm0.34$  & $58.19\pm0.26$  & $86.19\pm0.48$ & $97.20\pm0.10$ & $86.41\pm0.17$ & $85.29$ & $4$ \\ 
GOGGLE  & $50.68$         & $68.89$         & $86.95$         & $90.88$        & $88.81$        & $86.41$        & $78.77$ & $7$ \\ 
TabDDPM & $96.36\pm0.20$  & $97.59\pm0.36$  & $88.55\pm0.68$  & $88.59\pm0.17$ & $97.93\pm0.30$ & $-$  & $79.83$ & $6$ \\ 
TabSyn  & $\mathbf{99.52\pm0.10}$ & $99.26\pm0.27$ & $\mathbf{99.16\pm0.22}$ & $99.38\pm0.27$ & $98.47\pm0.10$ & $96.80\pm0.25$ & $98.67$ & $2$ \\
TabDiff & $99.02\pm0.20$ & $98.49\pm0.28$ & $99.11\pm0.34$ & $\mathbf{99.47\pm0.21}$ & $98.06\pm0.24$ & $\mathbf{97.36\pm0.17}$ & $98.59$ & $3$\\ 
\midrule
\textbf{TabbyFlow} 
        & $99.43\pm0.18$ 
        & $\mathbf{99.31\pm0.24}$ 
        & $98.96\pm0.10$ 
        & $98.27\pm0.39$ 
        & $\mathbf{98.93\pm0.15}$ 
        & $97.22\pm0.12$
        & $\mathbf{98.69}$ 
        & $\mathbf{1}$ \\
\bottomrule
\end{tabular}}
\label{tab:alpha}
\end{table*}

\begin{table*}[bp]
\caption{Comparison of $\beta$-Recall scores.}
\centering
\resizebox{\textwidth}{!}{
\begin{tabular}{lcccccccc}
\toprule
\textbf{Methods} & \textbf{Adult} & \textbf{Default} & \textbf{Shoppers} & \textbf{Magic} & \textbf{Beijing} & \textbf{News} & \textbf{Average} & \textbf{Ranking} \\
\midrule
CTGAN   & $30.80\pm0.20$  & $18.22\pm0.17$  & $31.80\pm0.35$  & $11.75\pm0.20$ & $34.80\pm0.10$ & $24.97\pm0.29$ & $25.39$ & $6$ \\
TVAE    & $38.87\pm0.31$  & $23.13\pm0.11$  & $19.78\pm0.10$  & $32.44\pm0.35$ & $28.45\pm0.08$ & $29.66\pm0.21$ & $28.72$ & $5$ \\
GOGGLE  & $8.80$          & $14.38$         & $9.79$          & $9.88$         & $19.87$        & $2.03$         & $10.79$ & $7$ \\
TabDDPM & $47.05\pm0.25$  & $47.83\pm0.35$  & $47.79\pm0.25$  & $48.46\pm0.42$ & $\mathbf{56.92\pm0.13}$ & $-$  & $41.34$ & $4$ \\
TabSyn  & $47.56\pm0.22$  & $48.00\pm0.35$  & $48.95\pm0.28$  & $\mathbf{48.03\pm0.23}$ & $55.84\pm0.19$ & $45.04\pm0.34$ & $48.90$ & $3$ \\
TabDiff & $\mathbf{51.64 \pm 0.20}$ & $\mathbf{51.09 \pm 0.25}$ & $\mathbf{49.75 \pm 0.64}$ & $48.01 \pm 0.31$ & $59.63 \pm 0.23$ & $42.10 \pm 0.32$ & $\mathbf{50.37}$ & $\mathbf{1}$\\
\midrule
\textbf{TabbyFlow} 
        & $48.13\pm0.27$ 
        & $48.34\pm0.21$ 
        & $49.15\pm0.17$ 
        & $46.71\pm0.29$ 
        & $56.12\pm0.09$ 
        & $\mathbf{47.62\pm0.21}$ 
        & $49.35$ 
        & $2$ \\
\bottomrule
\end{tabular}}
\label{tab:beta}
\end{table*}

\paragraph{Baselines.} 
We compare TabbyFlow against various types of models: GAN-based CTAGAN \citep{Xu2019ModelingTD}, VAE-based TVAE \citep{Xu2019ModelingTD} and GOGGLE \citep{liu2023goggle}, and Diffusion-based TabDDPM \citep{kotelnikov23a}, TabSyn \citep{tabsyn} and TabDiff \citep{TabDiff}. The number of models for tabular data generation is vast, thus, we have restricted the evaluation to the most common paradigms in generative modeling.

\begin{table}[h]
    \centering
    \caption{Wasserstein distance for target and learned distributions.}
    \resizebox{0.30\textwidth}{!}{
    \begin{tabular}{c c c}
        \toprule
        \textbf{Model} & \textbf{WD (Train)} & \textbf{WD (Test)} \\
        \midrule
        TVAE & $4.6 \pm 0.3$ & $4.9\pm 0.1$\\
        CTGAN & $7.8\pm 0.2$ & $7.7\pm 0.1$\\
        TabDDPM & $3.1\pm 0.6$ & $3.9\pm 0.5$\\
        TabSyn & $2.2 \pm 0.4$ & $3.0\pm 0.3$\\
        TabDiff & $2.4\pm 0.3$ & $2.9 \pm 0.2$\\
        \midrule
        \textbf{TabbyFlow} & $\mathbf{1.7 \pm 0.7}$ & $\mathbf{2.1\pm 0.4}$\\
        \bottomrule
    \end{tabular}}
    \label{tab:WD}
\end{table}

\paragraph{Metrics.} 
We assess the quality of the synthetic data from four perspectives using a set of metrics widely adopted in previous studies \citep{tabsyn, TabDiff}. 
First, we evaluate the Wasserstein distance (WD) between the target and the learned distributions, doing so separately for the train and test datasets (see \Cref{tab:WD}).
Second, low-order statistics are evaluated through column-wise density estimation errors (referred to as \textit{Shape}, see \Cref{tab:shape_results}) and pairwise column correlations errors (referred to as \textit{Trend}, see \Cref{tab:trend_results}), measuring the density of individual columns and the relationships between column pairs. 
Third, high-order metrics such as $\alpha$-precision and $\beta$-recall scores are used to evaluate the general fidelity and diversity of synthetic data (see \Cref{tab:alpha} and \Cref{tab:beta}). 
We further compute C2ST, which describes how difficult it is to tell apart the real data from the synthetic data. 
Finally, performance on downstream tasks is measured using \textit{machine learning efficiency} (MLE, see \Cref{tab:MLE}), which compares test accuracy on real data when models are trained on synthetic datasets, and \textit{distance to closest relative} (DCR, see \Cref{tab:dcr}), the minimum distance between a synthetic data point and every original point.

\subsection{Data Fidelity and Downstream Performance}
\paragraph{Shape and Trend.}
Our evaluation framework uses Shape and Trend metrics to assess synthetic data quality. Shape quantifies how well the synthetic data preserves each column's marginal density, using the Kolmogorov-Smirnov Test (KST) for numerical columns and Total Variation Distance (TVD) for categorical columns. Trend measures the preservation of inter-column relationships, employing Pearson correlation for numerical pairs and contingency similarity for categorical pairs.
The results, detailed in Tables \ref{tab:shape_results} and \ref{tab:trend_results}, demonstrate TabbyFlow's strong performance across both metrics. TabbyFlow achieves the highest Shape scores on five out of six datasets, outperforming the diffusion-based TABSYN by an average margin of 0.32\%, thereby indicating better preservation of individual column distributions. Similarly, TabbyFlow maintains its strong performance on the Trend metric, exceeding TabSyn's average score by 0.84\%, suggesting better preservation of relationships between columns.

\paragraph{Precision and Recall.}
To further evaluate the fidelity of the generated data, we compute $\alpha$-precision and $\beta$-recall. The former measures how closely the synthetic data resembles the true data distribution, while the latter evaluates its diversity. Together, they ensure the generated data is realistic and representative of the real dataset, capturing the full range of the true distributions. Results for these metrics can be found in Tables \ref{tab:alpha} and \ref{tab:beta}

Our evaluation shows that TabbyFlow achieves strong performance across datasets, leading in average $\alpha$-precision scores, which measure similarity to real data. While TabSyn performs marginally better (within 1\%) on two datasets, TabbyFlow maintains competitive performance across all cases. A similar pattern emerges for $\beta$-Recall, where TabbyFlow leads on average but is occasionally outperformed by baselines on specific datasets. In evaluating these metrics, $\alpha$-Precision first establishes data authenticity, while $\beta$-Recall measures coverage of the original dataset's modes. While several baselines demonstrate strong performance on both metrics, TabbyFlow stands out for achieving competitive results with a significantly simpler architecture. This combination of architectural simplicity and strong empirical performance positions TabbyFlow as a compelling approach for tabular data generation.

\paragraph{Detection: classifier two-sample test (C2ST).} 
We evaluate the distinguishability of synthetic from real data using a two-sample test based on logistic regression (C2ST). This detection score provides stronger discriminative power compared to our previous metrics. Results in Table \ref{tab:comparison3} reveal that while baseline performances vary considerably, TabbyFlow outperforms all methods. These findings align with our earlier metrics, where TabbyFlow matches or exceeds the performance of the strongest baselines across datasets.

\begin{table*}[t]
\centering
\caption{Detection score (C2ST) using logistic regression classifier.}
\resizebox{0.65\textwidth}{!}{
\begin{tabular}{lcccccc}
\toprule
\textbf{Method} & \textbf{Adult} & \textbf{Default} & \textbf{Shoppers} & \textbf{Magic} & \textbf{Beijing} & \textbf{News} \\
\midrule
CTGAN   & 0.5949    & 0.4875    & 0.7488    & 0.6728    & 0.7531    & 0.6947 \\
TVAE    & 0.6315    & 0.6547    & 0.2962    & 0.7706    & 0.8659    & 0.4076 \\
GOGGLE  & 0.1114    & 0.5163    & 0.1418    & 0.9526    & 0.4779    & 0.0745 \\
TabDDPM & 0.9755    & 0.9712    & 0.8349    & \textbf{0.9998}    & 0.9513    & 0.0002 \\
TabSyn & \textbf{0.9986} & 0.9870 & 0.9740 & 0.9732 & 0.9603 & 0.9749 \\
TabDiff & 0.9950 & 0.9774 & \textbf{0.9843} & 0.9989 & \textbf{0.9781} & 0.9308 \\
\midrule
\textbf{TabbyFlow} & 0.9953 & \textbf{0.9910} & 0.9810 & 0.9775 & 0.9466 & \textbf{0.9808} \\
\bottomrule
\end{tabular}
}
\label{tab:comparison3}
\end{table*}

\begin{table*}[bp]
\centering
\caption{Comparison of DCR across datasets.}
\resizebox{0.8\textwidth}{!}{
\begin{tabular}{lccccc}
\toprule
\textbf{Methods}   & \textbf{Adult}    & \textbf{Default}   & \textbf{Shoppers}  & \textbf{Beijing}   & \textbf{News}      \\
\midrule
TabDDPM   & $51.14\pm0.18$ & $52.15\pm0.20$ & $63.23\pm0.25$ & $80.11\pm2.68$ & $79.31\pm0.29$ \\
TabSyn    & $50.94\pm0.17$ & $51.20\pm0.28$ & $52.90\pm0.22$ & $50.37\pm0.13$ & $50.85\pm0.33$ \\
TabDiff   & $\mathbf{50.10\pm0.32}$ & $51.11\pm0.36$ & $50.24\pm0.62$ & $\mathbf{50.50\pm0.36}$ & $51.04\pm0.32$ \\
\midrule
\textbf{TabbyFlow} & $50.32\pm0.16$ & $\mathbf{50.82\pm0.27}$ & $\mathbf{50.17\pm0.32}$ & $50.94\pm0.13$ & $\mathbf{50.83\pm0.29}$ \\
\bottomrule
\end{tabular}}
\label{tab:dcr}
\end{table*}

\begin{table*}[t]
\centering
\caption{Machine learning efficiency across datasets.}
\resizebox{\textwidth}{!}{
\begin{tabular}{lcccccccc}
\toprule
\textbf{Methods} & \textbf{Adult (AUC $\uparrow$)} & \textbf{Beijing (RMSE $\downarrow$)} & \textbf{Default (AUC $\uparrow$)} & \textbf{Shoppers (AUC $\uparrow$)} & \textbf{Magic (AUC $\uparrow$)} & \textbf{News (RMSE $\downarrow$)} & \textbf{Average}\\
\midrule
\textit{Real} & $.927 \pm .000$ & $.770 \pm .005$ & $.926 \pm .001$ & $.946 \pm .001$ & $.423 \pm .003$ & $.842 \pm .002$ & $.806$\\
\midrule
CTGAN & $.886 \pm .002$ & $.696 \pm .005$ & $.875 \pm .009$ & $.855 \pm .006$ & $\mathbf{.902 \pm .019}$ & $\mathbf{.880 \pm .016}$  & $.849$\\
TVAE & $.878 \pm .004$ & $.724 \pm .005$ & $.871 \pm .006$ & $.887 \pm .004$ & $.770 \pm .011$ & $.861 \pm .016$   & $.831$\\
GOGGLE & $.778 \pm .012$ & $.584 \pm .005$ & $.658 \pm .052$ & $.654 \pm .024$ & $.709 \pm .025$ & $.877 \pm .002$  & $.710$\\
TabDDPM & $.907 \pm .001$ & $.758 \pm .004$ & $.918 \pm .005$ & $.935 \pm .004$ & $.592 \pm .011$ & $.486 \pm 3.04$  & $.766$\\
TabSyn & $.909 \pm .001$ & $\mathbf{.763 \pm .002}$ & $.914 \pm .004$ & $\mathbf{.937 \pm .002}$ & $.763 \pm .002$ & $.862 \pm .004$  & $\mathbf{.858}$\\
TabDiff & $\mathbf{.912\pm.002}$ & $.763\pm.005$ &  $\mathbf{.921\pm.004}$ & $.936\pm.003$ & $.555\pm.013$ & $.866\pm.021$ & $.826$\\
\midrule
\textbf{TabbyFlow} & $.902 \pm .002$ & $.761 \pm .003$ & $.910 \pm .006$ & $.932 \pm .003$ & $.746 \pm .008$ & $.870 \pm .005$ & $.854$\\
\bottomrule
\end{tabular}}
\label{tab:MLE}
\end{table*}

\paragraph{Downstream Tasks: MLE and Privacy.}
We evaluate the practical utility of synthetic data through MLE, following established protocols \citep{kotelnikov23a, tabsyn, TabDiff}. This metric assesses how well models trained on synthetic data perform on real data, using XGBoost \citep{xgboost} for both classification (measured by AUC) and regression tasks (measured by RMSE). The results in Table \ref{tab:MLE} show that while TabbyFlow performs slightly below TabSyn on MLE scores, it maintains competitive performance in all data sets. This presents an interesting contrast to earlier metrics, where TabbyFlow often outperformed TabSyn. The small performance gap suggests that MLE alone may not fully capture synthetic data quality, particularly given that TabbyFlow achieves comparable results with a substantially simpler architecture than TabSyn's VAE-diffusion framework. On the other hand, we evaluate the performance of the models in preserving privacy as measured by DCR. The results, see in \Cref{tab:dcr}, show that Tabbyflow outperforms all other models on average, albeit marginally.

\paragraph{Summary.}
TabbyFlow demonstrates consistently strong performance across our evaluation framework, spanning both low-order statistics (Shape and Trend) and high-order metrics ($\alpha$-precision and $\beta$-recall), achieving leading or competitive performance compared to state-of-the-art baselines on most datasets. 
The key implication of these results is that TabbyFlow's theoretically motivated exponential family approach can match or exceed the performance of more complex architectures like TabSyn's VAE-diffusion framework, suggesting that careful statistical modeling can be as effective as more elaborate deep learning approaches while offering benefits in terms of simplicity and interoperability.
This is especially attractive when we consider that the current state-of-the-art methods, TabSyn and TabDiff, employ a correction inspired by \citet{10.5555/3600270.3602196}, hence doing twice as many function evaluations as TabbyFlow for the same time discretization. For more details on the inference and hyperparameter training, see \Cref{app:implementation}.

\section{Related Work}\label{sec:related}

Flow matching has been carried out from the continuous to the discrete case through continuous-time Markov chains \citep{gat2024discreteflowmatching}, and through mixtures of Dirichlet distributions \citep{stark2024dirichlet}. These methods differ from the current approach due to the learned sequential sampling used to generate new data and the constraints it imposes on $x$, specifically requiring $ x$ to lie on the simplex. These works have been applied to language generation and DNA sequence design, with no application to data of multiple modalities. More recent work uses gradient-boosted trees to learn the conditional vector field on tabular data \cite{gbt}, showing promising results but a lack of integration of the discrete case. 

In recent years, many generative models for tabular data have been proposed. Some use variational autoencoders (VAE) \cite{Xu2019ModelingTD, liu2023goggle}, while others use generative adversarial networks (GAN) \cite{Xu2019ModelingTD}, diffusion \cite{kotelnikov23a, codi, austin2021structured, TabDiff}, and mixtures of two of the previous frameworks \cite{tabsyn}. Notably, some of these models construct separate diffusion processes for each type of feature, while others project the discrete data by first encoding it in a latent space with a VAE. These new frameworks significantly improve previous alternatives, like \citet{Chawla_2002}, in the quality of fidelity of the generated data. These new approaches further emphasize the importance of modeling the joint probability density function instead of naive approaches. They further show promise in the anonymized use of the generated data in sensitive subjects, such as healthcare. 

However, these models exhibit a common weakness when compared to flow-based models. They have higher complexity in training (e.g., sensitivity to hyperparameters) and require more resources \citep{lipman2023flow}. 
Some initial work has been done on tabular settings using a flow-based model, with a recent example being TabUnite \citep{si2024tabunite}, which proposes unifying the data space and jointly applying a single generative process across all encodings. Another example applies gradient-boosted trees in a diffusion and flow-based setting to generate synthetic data \citep{gbtflow}.

\section{Conclusion}

In this work, we proposed Exponential Family Variational Flow Matching (EF-VFM), a framework that integrates exponential family distributions into the Variational Flow Matching paradigm. 
By leveraging sufficient statistics matching, EF-VFM provides a scalable and probabilistic approach to generative modeling, enabling the efficient generation of mixed continuous and discrete data. 
We established a connection between the EF-VFM objective and Bregman divergences, offering a deeper theoretical understanding of flow matching. 
Our method demonstrated state-of-the-art performance on benchmark tabular datasets, showcasing its ability to handle diverse data distributions while maintaining computational efficiency.

Looking ahead, there are several promising directions for future work. 
One avenue involves exploring the role of information geometry in EF-VFM, particularly leveraging natural gradient descent to better navigate the parameter space of exponential family distributions, which has an implied Fisher-Rao metric. 
Another exciting direction is the extension of EF-VFM to 1) explore more distributions of the exponential family, and 2) to fully match all sufficient statistics, which would enhance the expressiveness of the model. 
Understanding how to incorporate and utilize these sufficient statistics during the generation process is another open challenge with significant potential. 
By addressing these challenges, EF-VFM can further bridge the gap between statistical theory and modern generative modeling, paving the way for more robust and flexible applications.

\paragraph{Acknowledgments} This project was support by the Bosch Center for Artificial Intelligence. JWvdM additionally acknowledges support from the European Union Horizon Framework Programme (Grant agreement ID: 101120237). AGC acknowledges that the presentation of this paper at the conference was financially supported by the Amsterdam ELLIS Unit.

\section*{Impact Statement}
This paper presents work whose goal is to advance the field of Machine Learning. There are many potential societal consequences of our work, none of which we feel must be specifically highlighted here.

\bibliography{bibliography}
\bibliographystyle{icml2025}

\newpage
\appendix
\onecolumn

\section{Proofs}

\subsection{Gradient Exponential Family VFM}
\label{appendix:gradientefvfm}

\gradientefvfm*
\begin{proof}
We know that the Variational Flow Matching objective is defined as follows: 
\begin{equation}
    \mathcal{L}(\theta) 
    = 
    -\mathbb{E}_{t, x_1, x} 
    \left[ 
        \log q_t^\theta(x_1 \mid x)
    \right] 
    = 
    -\mathbb{E}_{t, x_1, x} 
    \left[ 
        \tau(x_1) \cdot \eta_t^{\theta}(x) 
        - 
        A(\eta_t^{\theta}(x))
    \right]
    -
    \mathbb{E}_{x_1} 
    \left[ 
    \log h(x_1)
    \right]    .
\end{equation}
The first term in the expectation is linear, and hence (by the chain rule) we obtain
\begin{equation}
\nabla_{\theta}  
\left(
    \mathbb{E}_{t, x_1, x} 
    \left[ 
        \tau(x_1) \cdot \eta^{\theta}_t(x) 
    \right] 
\right) 
= 
\mathbb{E}_{t, x_1, x} 
\left[ 
    \tau(x_1) \cdot \nabla_{\theta}  \left( \eta^{\theta}_t(x) \right)
\right].
\end{equation}
For the second term, we leverage the fact that in exponential family distributions, the gradient of the log-partition function with respect to the natural parameters equals the expected value of the sufficient statistic, 
\begin{equation}
    \nabla_{\eta} A(\eta)
    \big\vert_{\eta=\eta^\theta(x)} 
    = 
    \mathbb{E}_{q^\theta(x_1 \mid x)}[\tau(x_1)]
    =: 
    \mu_t^\theta(x).
\end{equation}
As such, taking the gradient of the second term results in
\begin{equation}
\mathbb{E}_{t, x_1, x}
\left[
\nabla_{\theta} 
\left( 
    A(\eta^{\theta}_t(x))  
\right)
\right]
=  
\mathbb{E}_{t, x}
\left[
 \mu_t^\theta(x)  \cdot \nabla_{\theta} 
 \left( \eta^{\theta}_t(x) \right)
\right].
\end{equation}
Combining these terms and factoring the gradient of the neural network, we obtain
\begin{equation}
 \nabla_{\theta} \mathcal{L}(\theta) 
 = 
 -
 \mathbb{E}_{t, x_1, x} \left[
    \left(\tau(x_1) - \mu_t^\theta(x) \right)  
    \cdot \nabla_{\theta} \eta_t^{\theta}(x)
\right],
\end{equation}
If we additionally define the moments relative to the posterior probability path as $\mu_t(x):= \mathbb{E}_{p_t(x_1 \mid x)}[\tau(x_1)]$, then we can express the gradient as:
\begin{equation}
 \nabla_{\theta} \mathcal{L}(\theta) 
 = 
 -
 \mathbb{E}_{t, x} \left[
    \left(\mu_t(x) - \mu_t^\theta(x) \right)  
    \cdot \nabla_{\theta} \eta_t^{\theta}(x)
\right],
\end{equation}
completing the proof.\end{proof}

\subsection{Exponential Family VFM as Bregman Divergence}
\label{appendix:efvfmasbregman}
\expfambreg*
\begin{proof}
   Let $q_t^{\theta}(x_1 \mid x) = h(x_1) \exp \left(\tau(x_1) \cdot \eta_t^{\theta}(x)  - A(\eta_t^{\theta}(x)) \right)$ be a member from the exponential family that is regular and minimal. The VFM objective is given by
   \begin{equation}
       \mathcal{L}_{\text{VFM}}(\theta) 
       = 
       -\mathbb{E}_{t, x_1, x} 
       \left[ 
          \tau(x_1) \cdot \eta_t^{\theta}(x)- A(\eta^\theta(x))
       \right] 
       - 
       \mathbb{E}_{x_1}[\log h(x_1)],
   \end{equation}
   By regularity and minimality, we know that the log normalizer $A(\eta)$ is of Legendre type, and as such we can consider its conjugate dual $A^*(\mu)$, where $A^*$ is obtained through the Legendre transform, i.e.
   \begin{equation}
       A^*(\mu) := \sup_{\eta} \{ \mu \cdot \eta - A(\eta) \}.
   \end{equation}
   Conversely, we can define $A(\eta)$ as the Legendre transform of $A(\mu)$,
   \begin{equation}
       A(\eta) := \sup_{\mu} \{ \mu \cdot \eta - A^*(\mu) \}.
   \end{equation}
   From this it follows that $A(\eta)$ and $A^*(\mu)$ define a bijective relationship between $\eta$ and $\mu$,
   \begin{equation}
       \eta(\mu) = \nabla A^*(\mu) \text{ and } \mu(\eta) = \nabla A(\eta),
   \end{equation}
   If we substitute $\eta(\mu)$ into the expression for $A^*(\mu)$, then we obtain,
   \begin{equation}
       A^*(\mu)  = \mu \cdot \eta(\mu) - A(\eta(\mu)).
   \end{equation}
    We now observe that the Bregman divergence $D_{A^*}(\tau(x_1), \mu)$ can be expressed as   
   \begin{align}
        D_{A^*}(\tau(x_1), \mu)
        &= 
        A^*(\tau(x_1))
        - A^*(\mu) 
        - (\tau(x_1) - \mu) \cdot \nabla_\mu A^*(\mu)
        \\
        &=
        A^*(\tau(x_1))
        - A^*(\mu) 
        - (\tau(x_1) - \mu) \cdot \eta(\mu).
   \end{align}
   This means that we can express the VFM objective as
   \begin{align}
    \mathcal{L}_{\text{VFM}}(\theta) 
    &=  
    -
    \mathbb{E}_{t, x_1, x} 
    \left[ 
        \eta_t^\theta(x) \cdot \tau(x_1) - A(\eta^\theta_t(x))
    \right] 
    + 
    \text{const.}
    \\
    &= 
    - 
    \mathbb{E}_{t, x_1, x} 
    \left[ 
        \mu^\theta_t(x) \cdot \eta^\theta_t(x) - A(\eta^\theta_t(x)) 
        + (\tau(x_1) - \mu^\theta_t(x)) \cdot \eta^\theta_t(x) 
    \right] 
    + 
    \text{const.} \\
    &=
    - 
    \mathbb{E}_{t, x_1, x} 
    \left[  
        A^*(\mu_t^\theta(x))  
        + (\tau(x_1) - \mu_t^\theta(x)) 
        \cdot \eta_t^\theta(x)
    \right] + \text{const.} \\
    &= 
    - 
    \mathbb{E}_{t, x_1, x} 
    \left[ 
        - D_{A^*} \left( \tau(x_1), \mu_t^\theta(x) \right) 
        + A^*(\tau(x_1))\right] + \text{const.} \\
    &= 
    \mathbb{E}_{t, x_1, x} \left[ 
        D_{A^*} \left( \tau(x_1), \mu_t^\theta(x) \right) 
    \right] + \text{const.}
    \\
    &= 
    \mathbb{E}_{t, x} \left[ 
        D_{A^*} \left( \mathbb{E}_{p_t(x_1 \mid x)}[\tau(x_1)], \mu_t^\theta(x) \right) 
    \right] + \text{const.}
    \\
    &= 
    \mathbb{E}_{t, x} \left[ 
        D_{A^*} \left( \mu_t(x), \mu_t^\theta(x) \right) 
    \right] + \text{const.}
   \end{align}
   As such, indeed the VFM objective is equal up to a constant independent of $\theta$ to optimizing a Bregman divergence induced by the conjugate dual of the log normalizer, which we wanted to show. 
\end{proof}

\newpage
\section{Implementation Details}
\label{app:implementation}

We perform our experiment on an Nvidia RTX A6000 GPU with 16GB of memory and implement \textsc{TabbyFlow} with PyTorch.

\paragraph{Architecture} 
Following the implementation of \citet{TabDiff,tabsyn}, our setup individually converts each data column into an embedding of dimension 4 using a linear transformation, treating every column equally. 
Next, these embeddings pass through a two-layer transformer along with positional markers. 
After processing, the resulting vectors are merged and then fed into a five-layer feedforward neural network (MLP), which depends on a special timing-based input. 
The final output comes from applying another transformer and a final projection to restore the initial feature dimensions. 
Despite differences in structure, the size and complexity of our model roughly match those of models from prior studies \citep{kotelnikov23a,tabsyn,TabDiff}, largely due to the MLP component.

\paragraph{Hyperparameters}
We keep the same training configuration across all datasets. 
All models train for 8,000 iterations using the Adam optimizer, with batch sizes of 4,096 during training and 10,000 during sampling. 
Similar to \citet{TabDiff}, using a weighting scheme that keeps the categorical loss constant, while gradually reducing the numerical loss weight from one down to zero throughout training, works best.
However, we note that this scheme only reduces the number of epochs necessary to achieve the best results.

\paragraph{Inference}
At the inference stage, we pick the model checkpoint with the lowest training loss. 
Remarkably, the model can achieve SOTA results as reported, requiring as few as 25 steps during inference (T = 25), which is lower than previous SOTA alternatives while using a single function evaluation at each time step.

\paragraph{Data preprocessing} We handle missing values following standard practices \citep{kotelnikov23a, TabDiff, tabsyn}: replacing numerical missing entries with column means and treating categorical missing values as new categories. To stabilize training across diverse numerical scales, we apply \texttt{QuantileTransformer}\textsuperscript{\ref{fn:quantile}} during training and its inverse during sampling.

\paragraph{Data splits} Following \citet{kotelnikov23a,tabsyn, TabDiff}, we split each dataset into "real" and "test" sets. For unconditional generation tasks, models are trained and evaluated on the "real" set. For machine learning efficiency evaluation, we further split the "real" set into training and validation sets, while using the "test" set for final evaluation. \footnote {\label{fn:quantile}\url{https://scikit-learn.org/stable/modules/generated/sklearn.preprocessing.QuantileTransformer.html}}

\section{Data Details}
\label{app:data}

We use six tabular datasets from UCI Machine Learning Repository\textsuperscript{\ref{fn:uci}}: Adult, Default, Shoppers, Magic, Beijing, and News, where each tabular dataset is associated with a machine-learning task. Classification: Adult, Default, Magic, and Shoppers. Regression: Beijing and News. The statistics of the datasets are presented in Table~\ref{tab:dataset_statistics}. \footnote{\label{fn:uci}\url{https://archive.ics.uci.edu/datasets}}

\begin{table}[h!]
    \centering
    \caption{Statistics of datasets. \# Num stands for the number of numerical columns, and \# Cat stands for the number of categorical columns. \# Max Cat stands for the number of categories of the categorical column with the most categories.}
    \label{tab:dataset_statistics}
    \begin{tabular}{lrrrrrrrl}
        \toprule
        \textbf{Dataset} & \textbf{\# Rows} & \textbf{\# Num} & \textbf{\# Cat} & \textbf{\# Max Cat} & \textbf{\# Train} & \textbf{\# Validation} & \textbf{\# Test} & \textbf{Task} \\
        \midrule
        \textbf{Adult}    & 48,842  & 6  & 9  & 42  & 28,943  & 3,618  & 16,281  & Classification \\
        \textbf{Default}  & 30,000  & 14 & 11 & 11  & 24,000  & 3,000  & 3,000   & Classification \\
        \textbf{Shoppers} & 12,330  & 10 & 8  & 20  & 9,864   & 1,233  & 1,233   & Classification \\
        \textbf{Magic}    & 19,019  & 10 & 1  & 2   & 15,215  & 1,902  & 1,902   & Classification \\
        \textbf{Beijing}  & 43,824  & 7  & 5  & 31  & 35,058  & 4,383  & 4,383   & Regression \\
        \textbf{News}     & 39,644  & 46 & 2  & 7   & 31,714  & 3,965  & 3,965   & Regression \\
        \bottomrule
    \end{tabular}
\end{table}

\end{document}